\theoremstyle{definition}
\newtheorem{definition}{Definition}
\theoremstyle{theorem}
\newtheorem{theorem}{Theorem}
\newtheorem{corollary}{Corollary}
\newcommand{\vc}[1]{\mathbf{#1}}
\begin{document}

\title{On the Stability and Generalization of Learning with Kernel Activation Functions}

\author{{Michele~Cirillo, Simone~Scardapane, Steven~Van~Vaerenbergh and~Aurelio~Uncini}%

\thanks{M. Cirillo, S. Scardapane and A. Uncini are with the Dept. of Information Engineering, Electronics and Telecommunications (DIET), ``Sapienza'' University of Rome, Via Eudossiana 18, 00184, Rome. S. Van Vaerenbergh is with the Dept. of Mathematics, Statistics and Computing, University of Cantabria, Avda. de los Castros 48, 39005 Santander, Cantabria, Spain. Corresponding author's email: simone.scardapane@uniroma1.it.}}%

\markboth{IEEE Transactions on Neural Networks and Learning Systems}%
{Cirillo \MakeLowercase{\textit{et al.}}: Stability of Learning with KAFs}

\maketitle

\begin{abstract}
In this brief we investigate the generalization properties of a recently-proposed class of non-parametric activation functions, the kernel activation functions (KAFs). KAFs introduce additional parameters in the learning process in order to adapt nonlinearities individually on a per-neuron basis, exploiting a cheap kernel expansion of every activation value. While this increase in flexibility has been shown to provide significant improvements in practice, a theoretical proof for its generalization capability has not been addressed yet in the literature. Here, we leverage recent literature on the stability properties of non-convex models trained via stochastic gradient descent (SGD). By indirectly proving two key smoothness properties of the models under consideration, we prove that neural networks endowed with KAFs generalize well when trained with SGD for a finite number of steps. Interestingly, our analysis provides a guideline for selecting one of the hyper-parameters of the model, the bandwidth of the scalar Gaussian kernel. A short experimental evaluation validates the proof.
\end{abstract}

\begin{IEEEkeywords}
Stability, generalization, kernel activation functions, stochastic gradient descent.
\end{IEEEkeywords}

\IEEEpeerreviewmaketitle

\section{Introduction}
\label{sec:int}

The recent successes of deep learning models (e.g., in machine translation) have provided a parallel boost in understanding theoretically their good generalization properties. In particular, several works have been devoted to the so-called `overfitting puzzle' \cite{zhang2017understanding,poggio2017theory}, i.e., the fact that highly over-parameterized neural networks (NNs), able to immediately memorize the entire training set \cite{zhang2017understanding} in principle, are nonetheless able to generalize well even with only moderate amounts of regularization. While this is counter-intuitive from the point of view of classical results in statistical learning theory (e.g., capacity measures such as the VC-dimension), a wide range of alternative explanations have been proposed to justify the strong empirical performance of NNs \cite{neyshabur2017exploring,kawaguchi2017generalization,raghu2017expressive,asadi2018chaining}.

The vast majority of these works has focused on a standard class of NNs, composed of linear projections (or convolutions) interleaved with fixed, element-wise nonlinearities, particularly the rectified linear unit (ReLU) \cite{du2018gradient}. Some have also explored the interplay of this type of NNs with the optimization hyper-parameters, e.g., the batch size \cite{hoffer2017train}, or with newer types of regularization, such as batch normalization. 

However, several authors have recently advocated the need for other types of architectures, especially by the use of flexible activation functions, able to learn per-neuron shapes during the training process. Examples of these are the maxout network \cite{goodfellow2013maxout}, the adaptive piecewise linear unit \cite{agostinelli2014learning}, and the kernel activation function (KAF) \cite{scardapane2019kafnets}, which is the main focus of this paper. While KAFs and similar models have shown good empirical performance compared to classical architectures (e.g., see also \cite{scardapane2018recurrent} for the recurrent case), they introduce significantly more flexibility into the architecture, and thus more potentiality for overfitting. For this reason, it is essential to supplement their evaluation with thorough analyses of their generalization properties, which in the case of KAFs have not been addressed yet by previous research.

\subsubsection*{Contributions of the paper} 
The aim of this paper is to analyze the generalization capabilities of NNs endowed with KAF nonlinearities. To this end, we exploit the analysis presented in \cite{hardt2016train}, whose roots can be found in previous works linking the generalization capabilities of a model to its \emph{algorithmic stability} \cite{bousquet2002stability}. More in detail, in \cite{hardt2016train} it was shown that a non-convex model trained via stochastic gradient descent (SGD) for a finite number of steps can generalize well, provided we are able to bound several constants related to its Lipschitz continuity and smoothness. Using this, we can obtain bounds on the generalization properties of KAFs by indirectly analyzing their smoothness and plugging back these results in the theorems from \cite{hardt2016train}. Interestingly, our main theorem in this sense (see Section \ref{sec:generalization}) provides a rigorous bound on one key hyper-parameter of the model, thus providing a practical guideline for using KAFs in real-world scenarios. We note that the general outline of our proof method is similar to the one introduced in \cite{eisenach2016nonparametrically}, from which we take a few results. However, because of the differences in the models we explore, the bulk of the proof differs significantly from \cite{eisenach2016nonparametrically}.

\subsubsection*{Organization of the paper}
The rest of this brief is organized as follows. In Section \ref{sec:preliminaries} we introduce the key concepts from \cite{hardt2016train} that will be used for our analysis. Section \ref{sec:kafnets} describes the KAF model from \cite{scardapane2019kafnets}, which is the focus of this brief. We prove the generalization of this model in Sections \ref{sec:generalization} and \ref{sec:proof_of_the_main_theorem}. After a small experimental evaluation in Section \ref{sec:experimental_evaluation}, we conclude in Section \ref{sec:conclusions} with insights on possible future work.

\section{Preliminaries}
\label{sec:preliminaries}

In this section, we recall some basic elements from the stability theory of \cite{hardt2016train}, on which we build our analysis. We denote by $f(\vc{w}; \vc{x})$ a generic NN, where we collect all adaptable parameters in $\vc{w}$, while $\vc{x} \in \mathbb{R}^m$ denotes an input vector (later on we will specialize our analysis to a specific architecture for $f$, based on KAFs \cite{scardapane2019kafnets}). Given a loss function $l(\cdot, \cdot)$ and an (unknown) probability distribution $p(\vc{x}, y)$ generating the data, we define the expected risk of $f$ under $l$ as:

\begin{equation}
    R[\vc{w}] \triangleq \mathbb{E}_{p} \, l\left( y, f(\vc{w}; \vc{x}) \right) \,.
    \label{eq:expected_risk}
\end{equation}

\noindent If we are only provided with a sample of $n$ i.i.d. draws from $p$ given by $S = \left\{\vc{x}_i, y_i\right\}_{i=1}^n$, the empirical risk is the finite sample approximation of \eqref{eq:expected_risk} using $S$:

\begin{equation}
    R_S[\vc{w}] \triangleq \frac{1}{n} \sum_{i=1}^n l(y_i, f(\vc{w}; \vc{x}_i )) \,.
    \label{eq:empirical_risk}
\end{equation}

\noindent We want to bound the excess risk of optimizing \eqref{eq:empirical_risk} in place of \eqref{eq:expected_risk}, when $\vc{w}$ is the result of some \textit{randomized learning algorithm} $\vc{w} = A(S)$. We have the following definitions.

\begin{definition}
The expected generalization error of a randomized algorithm $A$ is given by:
\[
\varepsilon_{\text{gen}} \triangleq \mathbb{E}_{S,A} \left[ R_S[A(S)] - R[A(S)] \right] \,,
\]
where expectation is taken both with respect to all possible training sets and runs of the algorithm.
\label{def:L}
\end{definition}

\begin{definition}
Algorithm $A$ is $\varepsilon$-uniformly stable if, for all data sets $S$, $S'$ differing for at most one example, we have:
\[
\sup_{(\vc{x}, y)} \mathbb{E}_A \left[ l(y, f(A(S); \vc{x})) - l(y, f(A(S'); \vc{x})) \right]  \le \varepsilon \,.
\]
\label{def:beta}
\end{definition}

\noindent Fundamentally for our analysis, stability of an algorithm implies generalization in expectation (see also \cite{bousquet2002stability}).

\begin{theorem}{(Hardt et al., 2016 \cite{hardt2016train})}
If $A$ is $\varepsilon$-uniformly stable, then $\lvert \varepsilon_{\text{gen}} \rvert \le \varepsilon$.
\end{theorem}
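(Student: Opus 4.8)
The plan is to prove this classical stability-implies-generalization bound via a ghost-sample symmetrization (resampling) argument, which recasts $\varepsilon_{\text{gen}}$ as an average of one-point perturbations to which Definition~\ref{def:beta} applies verbatim. Throughout I abbreviate $l(z_i, \vc{w}) \triangleq l(y_i, f(\vc{w}; \vc{x}_i))$ for a point $z_i = (\vc{x}_i, y_i)$. I would introduce a second, independent i.i.d.\ sample $S' = \{z_i'\}_{i=1}^n \sim p$, and for each index $i$ define the swapped dataset $S^{(i)}$ obtained from $S$ by replacing its $i$-th example $z_i$ with $z_i'$; by construction $S$ and $S^{(i)}$ differ in exactly one example, which is precisely the regime controlled by uniform stability.

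First I would rewrite each half of the generalization error. The empirical term is immediate from the definition of $R_S$, namely $\mathbb{E}_{S,A}[R_S[A(S)]] = \mathbb{E}_{S,A}\big[\tfrac{1}{n}\sum_{i=1}^n l(z_i, A(S))\big]$. For the population term I would use that a fresh draw from $p$ is distributed identically to any $z_i'$, and that $A(S)$ is independent of $z_i'$, so that $R[A(S)] = \mathbb{E}_{z_i'}[l(z_i', A(S))]$ for each $i$; averaging over $i$ gives $\mathbb{E}_{S,A}[R[A(S)]] = \mathbb{E}_{S,S',A}\big[\tfrac{1}{n}\sum_{i=1}^n l(z_i', A(S))\big]$.

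The crux of the argument --- and the step I expect to be the main obstacle to state carefully --- is the relabeling identity $\mathbb{E}_{S,S',A}[l(z_i', A(S))] = \mathbb{E}_{S,S',A}[l(z_i, A(S^{(i)}))]$. This holds because, the examples being i.i.d., the joint law of $(S, z_i')$ is invariant under exchanging the $i$-th training point $z_i$ with the held-out point $z_i'$; that single swap simultaneously turns $S$ into $S^{(i)}$ and turns the held-out point into $z_i$. One must check that the internal randomness of $A$ is independent of the data, so the exchange acts only on the sample. Substituting this identity into the population term and subtracting the empirical term yields
\[
\varepsilon_{\text{gen}} = \mathbb{E}_{S,S',A}\Big[\tfrac{1}{n}\sum_{i=1}^n \big( l(z_i, A(S)) - l(z_i, A(S^{(i)})) \big)\Big] \,.
\]

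Finally, for each fixed $i$ the inner difference evaluates the same test point $z_i$ under the algorithm trained on the two datasets $S$ and $S^{(i)}$, which differ in one example; hence Definition~\ref{def:beta} bounds its conditional expectation over $A$ by $\varepsilon$, and the symmetric bound (obtained by swapping the roles of $S$ and $S^{(i)}$, which is legitimate since the stability estimate holds for every such ordered pair) controls it by $\varepsilon$ in absolute value. Averaging over $i$ and taking the remaining expectation over $(S,S')$ preserves the bound, giving $\lvert \varepsilon_{\text{gen}} \rvert \le \varepsilon$.
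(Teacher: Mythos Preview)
Your argument is correct and is precisely the ghost-sample symmetrization proof given in \cite{hardt2016train} (and, earlier, in \cite{bousquet2002stability}); the paper itself does not reprove the statement but simply cites it, so there is no in-paper proof to compare against.
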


\noindent In particular \cite{hardt2016train} considers training with SGD, where at every iteration the estimate $\vc{w}_t$ is refined as:

\begin{equation}
    \vc{w}_{t+1} = \vc{w}_t - \mu_t \nabla l(y_{i(t)}, f(\vc{w}_t; \vc{x}_{i(t)})) \,,
\end{equation}

\noindent with the index $i(t)$ randomly chosen at every time step and $\mu_t$ is the step size. Before introducing the main theorem from \cite{hardt2016train}, we need two additional definitions.

\begin{definition}
A real-valued function $l$ is $L$-Lipschitz if for any points $\vc{w}'$ and $\vc{w}''$ in its domain we have:
\[
|l(\vc{w}')-l(\vc{w}'')|\leq \lVert \vc{w}' - \vc{w}'' \rVert \,.
\]
\end{definition}

\begin{definition}
A real-valued function $l$ is $\beta$-smooth if for any points $\vc{w}'$ and $\vc{w}''$ in its domain we have:
\[
\lVert \nabla l(\vc{w}') - \nabla l(\vc{w}'') \rVert \leq \beta \lVert \vc{w}' - \vc{w}'' \rVert \,.
\]
\end{definition}

\begin{theorem}{(\textit{Uniform stability of SGD})}
Assume that the loss function $l(\cdot, \cdot)$ is bounded, $L$-Lipschitz and $\beta$-smooth in terms of the vector $\vc{w}$, for any couple $(\vc{x},y)$. Assume we optimize it with SGD for a finite number $T$ of steps, using monotonically non-increasing step sizes $\mu_t \le c/t$, with $t=1,\ldots,T$ and $c$ a certain constant. Then, the algorithm is $\varepsilon$-uniformly stable with:
\begin{equation}
\varepsilon \le \frac{1+1/\beta c}{n - 1}\left(2cL^2\right)^{\frac{1}{\beta c + 1}}T^{\frac{\beta c}{\beta c + 1}} \,.
\label{eq:varepsilon}
\end{equation}
\end{theorem}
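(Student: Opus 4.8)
The plan is to bound the uniform stability constant $\varepsilon$ in Definition~\ref{def:beta} by controlling how far two SGD trajectories drift apart when run on neighbouring data sets. First I would fix samples $S$ and $S'$ that differ in a single example and run SGD on both using an \emph{identical} sequence of random indices $i(1),\dots,i(T)$, obtaining iterates $\vc{w}_t$ and $\vc{w}_t'$. The central object is the parameter distance $\delta_t \triangleq \lVert \vc{w}_t - \vc{w}_t' \rVert$, and the whole argument reduces to bounding $\mathbb{E}[\delta_T]$: the $L$-Lipschitz assumption immediately converts any bound on $\mathbb{E}[\delta_T]$ into a bound on the loss gap $\mathbb{E}\lvert l(y,f(\vc{w}_T;\vc{x}))-l(y,f(\vc{w}_T';\vc{x}))\rvert \le L\,\mathbb{E}[\delta_T]$ appearing in the definition of stability.

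The core technical step is a one-step growth recursion for $\delta_t$. Two cases arise depending on the drawn index. If $i(t)$ is \emph{not} the example in which $S$ and $S'$ differ (probability $1-1/n$), both trajectories apply the gradient of the \emph{same} loss term, and $\beta$-smoothness shows that the update map $\vc{w}\mapsto \vc{w}-\mu_t\nabla l$ is $(1+\mu_t\beta)$-expansive, so $\delta_{t+1}\le(1+\mu_t\beta)\delta_t$. If instead $i(t)$ is the differing example (probability $1/n$), the two updates use different loss terms, but $L$-Lipschitzness bounds each gradient norm by $L$, giving $\delta_{t+1}\le \delta_t + 2\mu_t L$. Averaging over the random index yields
\begin{equation}
\mathbb{E}\left[\delta_{t+1}\right] \le \left(1 + \mu_t \beta\right)\mathbb{E}\left[\delta_t\right] + \frac{2\mu_t L}{n} \,.
\label{eq:growth}
\end{equation}

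The part I expect to be the main obstacle is turning \eqref{eq:growth} into a non-vacuous bound, since unrolling it naively from $t=0$ lets the product $\prod_t(1+\mu_t\beta)$ blow up. The fix is a burn-in argument that exploits the \emph{boundedness} of $l$: for a free cutoff $t_0$ I would condition on the event $\xi$ that the differing example is not sampled during the first $t_0$ steps, so that $\delta_{t_0}=0$; the complementary event has probability at most $t_0/n$, and there the loss gap is controlled crudely by $\sup l$. This decomposes the stability bound as $\varepsilon \le \tfrac{t_0}{n}\sup l + L\,\mathbb{E}[\delta_T\mid \xi]$, isolating the genuinely delicate contribution in the conditional term.

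Finally I would solve \eqref{eq:growth} from $\delta_{t_0}=0$. Substituting $\mu_t\le c/t$, using $1+x\le e^x$ to bound each product factor by $(T/t)^{\beta c}$, and comparing the resulting sum with $\int_{t_0}^{\infty} t^{-\beta c-1}\,dt$, gives $\mathbb{E}[\delta_T\mid \xi] \le \tfrac{2cL}{\beta c\, n}(T/t_0)^{\beta c}$. Plugging this back and optimizing $t_0$ by elementary calculus — the optimum is $t_0\propto (2cL^2)^{1/(\beta c+1)}T^{\beta c/(\beta c+1)}$ — balances the two contributions so that they collapse into the single factor $\left(1+1/\beta c\right)(2cL^2)^{1/(\beta c+1)}T^{\beta c/(\beta c+1)}$, yielding \eqref{eq:varepsilon} up to the $n-1$ normalization in the denominator.
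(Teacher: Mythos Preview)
Your sketch is correct and faithfully reproduces the argument of Theorem~3.12 in \cite{hardt2016train}, which is exactly what the paper does here: it does not give its own proof of this statement but simply cites that reference. The coupling of the random index sequence, the $(1+\mu_t\beta)$-expansiveness / $2\mu_t L$ dichotomy, the burn-in decomposition via $t_0$, and the final optimization over $t_0$ are precisely the steps in the cited proof, so there is nothing to add or correct.
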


\noindent The proof can be found in Theorem 3.12 from \cite{hardt2016train}. Summarizing these results, for any architecture trained using SGD one can prove generalization bounds in this way: first, one proves that the loss function is bounded, \mbox{$L$-Lipschitz} and \mbox{$\beta$-smooth}; then, one uses the Theorems 1 and 2 to automatically infer uniform stability and, in turn, generalization bounds. 
Remarkably, this also provides an intuition which is somehow in contrast with the overfitting puzzle, i.e., it makes sense to use models for $f$ which are as expressive as possible, provided this additional flexibility translates to a faster rate of convergence, allowing to reduce the number of SGD steps taken during training. In the words of \cite{hardt2016train}: ``\textit{it may make sense for practitioners to focus on minimizing training time, for instance, by designing model architectures for which stochastic gradient method converges fastest to a desired error level}.''

\section{Kernel activation functions}
\label{sec:kafnets}

Consider now a feedforward, fully connected NN architecture:

\begin{equation}
    f(\vc{w};\vc{x}) = f_Q \circ f_{Q-1} \circ \ldots \circ f_1 \left(\vc{w}; \vc{x} \right) \,.
    \label{eq:nn}
\end{equation}

\noindent where each hidden layer $f_i$ consists of a certain number $H_i$ of functions, the neurons. More formally, let $f_{ij}(\vc{w};\cdot)$ denote the $j$th neuron of the $i$th layer. When applied to a $p\times1$ input vector $\vc{u}$, we have that $f_{ij}$ is of the type $f_{ij}=a_{ij} \circ g_{ij}$, where:
\begin{equation}
g_{ij}(\vc{w};\vc{u}) \triangleq \sum_{h=1}^p W_{ijh} \cdot u_h + b_{ij}\,,
\end{equation}
and $a_{ij}$ is a certain nonlinear transformation, the activation function. Some classical choices for such nonlinearities are $\tanh$ and ReLU, for which Lipschitz properties have received considerable attention \cite{wiatowski2018mathematical}. However, in this paper we consider a more expressive kind of activation functions, that can adapt their shape based on the training data, namely, that have parameters to be tuned during the learning process. In particular, for $i < Q$ we model each $a_{ij}$ as a KAF \cite{scardapane2019kafnets}, whose definition is given below, whereas, since we will consider multiclass classification problems, in the last layer, $i=Q$, we define the activation functions $a_{ij}$'s in order to constitute the softmax.
\begin{definition}
	A \textbf{kernel activation function} (KAF) \cite{scardapane2019kafnets} for layer $i$, node $j$ is defined as:
	\begin{equation}
	a_{ij}(s) = \sum_{k=1}^D \alpha_{ijk} \cdot \kappa \left(s, d_k \right) \,,
	\label{eq:kaf}
	\end{equation}
	where $s$ is a generic input to the KAF, $D$ is a hyper-parameter, $\alpha_{ijk}$ are the mixing coefficients, $\kappa$ is a one-dimensional kernel function, and  $d_1, \ldots, d_D$ are the dictionary elements.
\end{definition}
As in the original implementation of KAFs, we consider a separate set of mixing coefficients for every neuron (which are adapted via SGD together with the parameters of the linear projection), while the dictionary elements are fixed, shared across the entire network. In other words, in our scenario the parameter vector $\vc{w}$ is composed by the parameters $W_{ijh}$'s and $b_{ij}$'s, as usual, but also by the mixing coefficients $\alpha_{ijk}$'s of the activation functions in the hidden layers. The dictionary elements are defined by first choosing a value for $D$ and then sampling $D$ values across the $x$-axis uniformly with a certain sampling step. Generally speaking, increasing $D$ allows to increase the expressivity of the single activation function. 
Depending on the kernel function $\kappa$, one obtains different schemes for the functions' adaptation. In \cite{scardapane2019kafnets} and most of the later extensions, e.g. \cite{scardapane2018recurrent}, a Gaussian kernel is used:
\begin{equation}
\kappa(s, d_k) = e^{- \gamma (s - d_k)^2 }\,,
\label{eq:defe}
\end{equation}
\noindent where $\gamma$ is a second hyper-parameter (the inverse of the kernel bandwidth). Varying $\gamma$ has a clear intuitive meaning in varying the `receptive field' of each component of the linear expansion in \eqref{eq:kaf}. We consider this kernel for our analysis. A NN endowed with KAFs at all hidden layers is called a \textit{Kafnet}.

Wrapping up, by joining together the definitions~\mbox{(\ref{eq:nn})-(\ref{eq:defe})}, we have that the $j$th neuron in the $i$th inner layer, for $i=1,\ldots,Q-1$, is a function defined as:
\begin{equation}
a_{ij}(\vc{w};\vc{x}) \triangleq \sum_{k=1}^D \alpha_{ijk} \cdot e^{- \gamma (g_{ij}(\vc{w};\vc{x}) - d_k)^2}\,,
\label{eq:defa}
\end{equation}
with:
\begin{equation}
g_{ij}(\vc{w};\vc{x}) \triangleq \sum_{h=1}^{H_{i-1}} W_{ijh} \cdot a_{i-1h}(\vc{w};\vc{x})+b_{ij}.
\label{eq:defg}
\end{equation}
The function $g_{ij}$ represents the affine transformation applied on the inputs of the neuron. Such inputs, due to the feedforward and fully connected architecture of the considered network, are the outputs $a_{i-1j}$'s of the neurons from the previous layer. For the case $i=1$, we have that the quantity $a_{i-1j}$ is equal to the input $x_j$ and $H_{i-1}$ is equal to $m\triangleq\textnormal{card}\{\vc{x}\}$. The neurons in the output layer, where $i=Q$, are defined analogously, with the only difference that the softmax function is used in place of~\eqref{eq:defa}. 

\section{Generalization properties of Kafnets}
\label{sec:generalization}

To simplify reading, we state here our main result, and postpone its proof to the next section. As stated before, we focus on classification problems, although the method is easily extendable to regression tasks. For this, we assume the output function $f_Q$ in \eqref{eq:nn} is a softmax, while $l$ in \eqref{eq:empirical_risk} is the cross-entropy loss. As in the previous section, we use $i \in [1, \ldots, Q]$ to index a layer, $j \in [1, \ldots, H_i]$ to index a neuron inside a layer\footnote{Strictly speaking we should write $j_i$, but we drop the suffix for readability.}.

\begin{theorem}{(\textit{Smoothness of Kafnets})}
Let us assume that there exist constants $a$, $W$, $b$, $\alpha$ such that:
\begin{enumerate}
    \item For any $h$, $\lvert x_h \rvert \le a$;
    \item For any $i, j, h$, $\lvert W_{ijh} \rvert \leq W$ and $\lvert b_{ij} \rvert \le b$;
    \item For any $i, j, k$, $\lvert \alpha_{ijk} \rvert \le \alpha$.
\end{enumerate}

\noindent In addition, if we define $H \triangleq \max\left\{H_1, \ldots, H_Q, D\right\}$, under the following conditions:
\begin{enumerate}
    \setcounter{enumi}{3}
    \item $Q \ge 2$;
    \item The elements of the dictionary are sampled uniformly in $\left[-R, R\right]$ with $R \in \mathcal{O}(D)$;
    \item $\gamma H^2 \ge 1$,
\end{enumerate}

\noindent then the loss function in \eqref{eq:empirical_risk} is bounded, $L$-Lipschitz and $\beta$-smooth with respect to $\vc{w}$, with:
\begin{equation}
    L \in \mathcal{O}\left( \sqrt{Q} \left( \gamma H^4 \right)^{Q-1} \right) \,, \;\; \beta \in \mathcal{O}\left( Q\left(\gamma H^4\right)^{2(Q-1)} \right).
\end{equation}
We remark that the asymptotic notation used above is related to the variables $H$ and $\gamma$, whereas the parameters $a$, $W$, $b$ and $\alpha$ are intended to be constants of the problem, namely, belonging to $\mathcal{O}(1)$.
\label{theorem:kafnets_smoothness}
\end{theorem}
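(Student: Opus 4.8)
The plan is to prove the three properties \emph{indirectly}: rather than differentiating the loss directly, I reduce every statement about $l$ to a statement about the pre-softmax outputs $g_{Qj}(\vc{w};\vc{x})$ of the network, and then control the derivatives of the map $\vc{w}\mapsto g_{Qj}$ by propagating local, per-layer estimates through the composition in \eqref{eq:nn}. Since the softmax followed by the cross-entropy has first and second derivatives with respect to the logits that are bounded by absolute constants (a standard fact), boundedness, $L$-Lipschitz continuity and $\beta$-smoothness of $l$ in $\vc{w}$ all follow, up to constants, from the same three properties of $\vc{w}\mapsto g_{Qj}$. Before any derivative bound, I would first pin down the boundedness of every intermediate quantity, on which all later estimates depend. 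Using $\lvert\kappa\rvert\le 1$ and assumption (3), each activation obeys $\lvert a_{ij}\rvert\le D\alpha\in\mathcal{O}(H)$; substituting into \eqref{eq:defg} with assumptions (1)--(2) gives $\lvert g_{ij}\rvert\in\mathcal{O}(H^2)$, while assumption (5) forces $\lvert d_k\rvert\in\mathcal{O}(H)$, so that $\lvert g_{ij}-d_k\rvert\in\mathcal{O}(H^2)$. Boundedness of $l$ then follows because the logits are bounded, which keeps the softmax outputs bounded away from $0$.

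For the Lipschitz constant I would first record the local derivatives. From \eqref{eq:defa}, $\partial a_{ij}/\partial\alpha_{ijk}=\kappa(g_{ij},d_k)\in\mathcal{O}(1)$ and $\partial a_{ij}/\partial g_{ij}=-2\gamma\sum_k\alpha_{ijk}(g_{ij}-d_k)\kappa(g_{ij},d_k)$, which the crude estimate $\lvert g_{ij}-d_k\rvert\in\mathcal{O}(H^2)$ bounds by $\mathcal{O}(\gamma H^3)$; from \eqref{eq:defg}, $\partial g_{ij}/\partial W_{ijh}=a_{i-1,h}\in\mathcal{O}(H)$, $\partial g_{ij}/\partial b_{ij}=1$, and $\partial g_{ij}/\partial a_{i-1,h}=W_{ijh}\in\mathcal{O}(1)$. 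Consequently the Jacobian of one layer's outputs with respect to the previous layer's outputs has entries of order $\mathcal{O}(\gamma H^3)$, and being $H\times H$ its operator norm is $\mathcal{O}(\gamma H^4)$: this is the per-layer amplification factor. Reaching a parameter in layer $i$ from the output via the chain rule multiplies $Q-i$ such Jacobians by a single local parameter derivative, so each gradient block is $\mathcal{O}((\gamma H^4)^{Q-1})$; collecting the $\mathcal{O}(Q)$ blocks in the Euclidean norm yields $L\in\mathcal{O}(\sqrt{Q}(\gamma H^4)^{Q-1})$.

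For smoothness I would bound the Lipschitz constant of the gradient through a recursion over layers. Writing $L_i$ and $\beta_i$ for the Lipschitz and smoothness constants of the partial network up to layer $i$, the chain rule for compositions yields the familiar form $\beta_i\lesssim(\text{local smoothness})\,L_{i-1}^2+(\text{local Lipschitz})\,\beta_{i-1}$, where the second derivatives of the KAF enter as the local smoothness: differentiating once more gives $\partial^2 a_{ij}/\partial g_{ij}^2\in\mathcal{O}(\gamma^2 H^5)$ and the mixed term $\partial^2 a_{ij}/\partial g_{ij}\partial\alpha_{ijk}\in\mathcal{O}(\gamma H^2)$. Solving the recursion squares the per-layer factor, and accounting for the $\mathcal{O}(Q)$ Hessian blocks produces $\beta\in\mathcal{O}(Q(\gamma H^4)^{2(Q-1)})$, consistent with the structural relation $\beta\sim L^2$. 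Here assumptions (5)--(6) act as the technical lubricant: $R\in\mathcal{O}(D)$ is precisely what makes $\lvert g_{ij}-d_k\rvert\in\mathcal{O}(H^2)$, while $\gamma H^2\ge 1$ lets me absorb lower-order pieces (for instance the $\mathcal{O}(\gamma)$ part of $\partial^2 a_{ij}/\partial g_{ij}^2$ against its $\mathcal{O}(\gamma^2 H^4)$ part) into the dominant term, so the asymptotic bookkeeping closes.

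I expect the main obstacle to be the smoothness estimate, not the Lipschitz one. Differentiating a gradient that is already a product of $\mathcal{O}(Q)$ Jacobians produces, through the product rule, a sum of $\mathcal{O}(Q)$ terms, each a long product in which exactly one factor has been replaced by a second derivative; tracking this combinatorial expansion and verifying that every one of these terms is dominated by $\mathcal{O}((\gamma H^4)^{2(Q-1)})$ — so that they collapse to the stated order after multiplication by the $\mathcal{O}(Q)$ block count — is the delicate part. The recursive operator-norm formulation above is exactly what tames this explosion, reducing an otherwise opaque Hessian bound to two coupled scalar recursions that can be solved in closed form and then fed back into Theorems 1 and 2.
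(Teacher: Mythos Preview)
Your proposal is correct and follows essentially the same route as the paper: reduce $L$ and $\beta$ to bounds on the first and second derivatives of the pre-softmax map $\vc{w}\mapsto g_{Qj}$, then propagate per-layer estimates through the composition, picking up a factor $\mathcal{O}(\gamma H^4)$ at each hidden layer and using assumptions (5)--(6) to make the dominant terms absorb the rest. The only cosmetic difference is packaging: the paper tracks scalar entrywise bounds $Y_i,Z_i$ via an explicit case analysis over the parameter types ($W_{ijh}$, $b_{ij}$, $\alpha_{ijk}$) and an induction on $i$, invoking external lemmas for the softmax/cross-entropy reduction, whereas you phrase the same recursion in terms of layer-to-layer Jacobian operator norms and handle that reduction yourself.
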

\begin{proof}
Due to the large number of computations involved, the proof is postponed to Section \ref{sec:proof_of_the_main_theorem}.
\end{proof}

\noindent Note that conditions (1)-(3) are similar to those used in \cite{eisenach2016nonparametrically} and can be enforced trivially. Condition (4) simply states that the network has at least one hidden layer. Conditions (5) and (6) are more interesting because they impose some constraints on the hyper-parameters of the model. In particular, condition (5) requires that the elements of the dictionary should be sampled from a sufficiently strict distribution, that increases at most linearly w.r.t. $H$, while condition (6) is a non-trivial constraint on the receptive field of each mixing coefficient.
Combining this result with the results from Section \ref{sec:preliminaries} gives us the desired property of generalization.

\section{Proof of the main theorem}
\label{sec:proof_of_the_main_theorem}

\subsection{Notational Conventions and Outline of the Proof.}
For the sake of clarity, in the following we will use a more compact notation to represent the quantities involved in~\eqref{eq:defa} and~\eqref{eq:defg}, introduced in Section~\ref{sec:kafnets}. In particular, we pose:
\begin{equation}
G_{ij} \triangleq g_{ij}(\vc{w};\vc{x}), 
~~~~~ A_{ij} \triangleq a_{ij}(\vc{w};\vc{x}),
~~~~~ E_{ijk} \triangleq e^{- \gamma (G_{ij} - d_k)^2 }.
\label{eq:defGAE}
\end{equation}
Furthermore, for any parameters $z$ and $w$ in $\vc{w}$, we define the generic partial derivatives of $G_{ij}$ as:
\begin{equation}
\frac{\partial G_{ij}}{\partial z} \triangleq G_{ij}'(z)\,, 
\quad 
\frac{\partial^2 G_{ij}}{\partial (z,w)} \triangleq G_{ij}''(z,w)\,. 
\end{equation}
Likewise, we define the partial derivatives of $E_{ijk}$ and $A_{ij}$ as $E_{ijk}'(z)$, $A_{ij}'(z)$, $E_{ijk}''(z,w)$ and $A_{ij}''(z,w)$.

Now we are ready to depict the outline of the proof, deferring some technical digressions in the next subsections. According to lemmas B.9 and B.10 in \cite{eisenach2016nonparametrically}, if the functions $G_{ij}'(z)$'s and $G_{ij}''(z,w)$'s are bounded by $Y_Q$ and $Z_Q$, respectively, then the loss function is bounded, $L$-Lipschitz and $\beta$-smooth. More precisely, in asymptotic notation the two lemmas state that:
\begin{equation}
L\in\mathcal{O}\left(\sqrt{Q} H Y_Q\right), \qquad \beta\in\mathcal{O}\left(Q H^2 \left(Y_Q^2+Z_Q\right)\right)\,.
\label{eq:Lbasym}
\end{equation} 
The core part of the proof consists of proving:
\begin{equation}
Y_Q\in\mathcal{O}\left( (\gamma H^3)(\gamma H^4)^{Q-2} \right), \quad Z_Q\in\mathcal{O}\left( (\gamma H^2)^2(\gamma H^4)^{2(Q-2)} \right)\,.
\label{eq:VWQasym}
\end{equation} 
In fact, once these bounds are computed, the statement easily follows by substituting~\eqref{eq:VWQasym} in~\eqref{eq:Lbasym} \footnote{The computations of these bounds heavily depend on the kind of activation functions that are employed. For this reason, even if our proof follows a similar outline of its counterpart in~\cite{eisenach2016nonparametrically}, it is heavily different in its technical apparatus, due to the very different characteristics of the activation functions involved here.}. Therefore, in the reminder of the proof we will focus on proving~\eqref{eq:VWQasym}.

\subsection{Some intermediate results.}
In this subsection we collect some intermediate results that are needed to prove~\eqref{eq:VWQasym}. Before continuing, we present an useful inequality that will be repeatedly used in the next, $|a+b|\leq|a|+|b|$, that is well known as triangle inequality.

We start with three boundary inequalities for the functions $E_{ijk}$'s, $A_{ij}$'s and $G_{ij}$'s. First, as a direct consequence of definition~\eqref{eq:defe}, and using the terminology introduced in~\eqref{eq:defGAE}, for any $i,j,k$ we have:
\begin{equation}
	|E_{ijk}| \leq 1 \,.
	\label{eq:boundsE}
\end{equation}
Likewise, according to~\eqref{eq:defa} and~\eqref{eq:defGAE}, we have that \mbox{$A_{ij}=\sum_{k=1}^D \alpha_{ijk} \cdot E_{ijk}$}. For the triangle inequality, we have \mbox{$|A_{ij}| \leq \sum_{k=1}^D |\alpha_{ijk}| \cdot |E_{ijk}|$}, therefore, using~\eqref{eq:boundsE} and the hypothesis $|\alpha_{ijk}| \leq \alpha$, for any $i,j$ we get:
\begin{equation}
|A_{ij}| \leq D \alpha 
\label{eq:boundsA}
\end{equation}
Finally, from~\eqref{eq:defa} and~\eqref{eq:defGAE}, we have \mbox{$G_{ij}=\sum_{k=1}^{H_{i-1}} W_{ijh} \cdot A_{i-1h} + b_{ij}$}.
Again, for the triangle inequality, we have \mbox{$|G_{j}| \leq \sum_{h=1}^{H_{i-1}} |W_{ijh}| \cdot |A_{i-1h}| + |b_{ij}|$}. Therefore, when $i=1$, recalling that $A_{0h}=x_h$ and $H_0=m$, and using the hypotheses $|x_h| \leq a$, $|W_{ijh}| \leq W$ and $|b_{ij}|\leq b$, we have: 
\begin{equation}
|G_{1j}| \leq m W a + b
\label{eq:boundsG1}
\end{equation}
Contrariwise, when $i\geq 2$, the hypotheses $|W_{ijh}| \leq W$ and $|b_{ij}|\leq b$, together with the previous result in~\eqref{eq:boundsA}, lead to:
\begin{equation}
|G_{ij}| \leq H_{i-1} W D \alpha + b
\label{eq:boundsG}
\end{equation}

Now, we analyse the bounds of the first and second order derivatives of the functions $E_{ijk}$'s, $A_{ij}$'s and $G_{ij}$'s. In particular, these results hold under the assumption that, given a fixed $i$, for any $j$:
\begin{equation}
\left|G_{ij}\right| \leq X_i\,, \quad \left|G_{ij}'(z)\right| \leq Y_i\,, \quad \left|G_{ij}''(z,w)\right| \leq Z_i\,.
\label{eq:hpUVW}
\end{equation} 
From the definition of $E_{ijk}$ in~\eqref{eq:defe}, we obtain:
\begin{eqnarray}
E_{ijk}'(z)=-2 \gamma E_{ijk} \left( G_{ij} - d_k \right) G_{ij}'(z),~~~~~~~~~~~~~~~~~~~~~~~~~~~~~~~\,
\nonumber\\
E_{ijk}''(z,w)=-2 \gamma E_{ijk} [ ( 1-2\gamma ( G_{ij} - d_k )^2 ) G_{ij}'(z) G_{ij}'(w) ~~~~~~~~~~~~~~
\nonumber\\+~ (G_{ij} - d_k) G_{ij}''(z,w)],~~~~~~~~~~~~~~~~~~~~~~~~~~~~~~~~~~~~~\,
\nonumber
\end{eqnarray}
Since, for the triangle inequality, \mbox{$|G_{ij} - d_k| \leq |G_{ij}| + |d_k|$}, and since for hypothesis $|d_k| \leq R$ and $|G_{ij}|\leq X_i$, and recalling also that $|E_{ijk}| \leq 1$, we obtain:
\begin{eqnarray}
|E_{ijk}'(z)|&\leq& 2\gamma(X_i + R)Y_i \,,
\nonumber\\
|E_{ijk}''(z,w)|&\leq& 2\gamma[(1+2\gamma(X_i + R)^2)Y_i^2+(X_i + R)Z_i]\,.
\nonumber\\
\label{eq:boundsEppp}
\end{eqnarray}
Now, we focus on the functions $A_{ij}'(z)$'s and $A_{ij}''(z,w)$'s. Let us define \mbox{$\alpha_{ij}\triangleq \{\alpha_{ijk}\}_{k=1}^D$}, from the definition of $A_{ij}$ in~\eqref{eq:defa} we have:
\begin{eqnarray}
& A_{ij}'(z) = \begin{cases}
E_{ijk}, & \mbox{if $z\in\alpha_{ij}$} 
\\
\\
\displaystyle{\sum_{k=1}^D} \alpha_{ijk} \cdot E_{ijk}'(z), & \mbox{if $z\notin\alpha_{ij}$}
\end{cases}
\nonumber
\end{eqnarray}
and, recalling that $A_{ij}''(z,w)=\partial A_{ij}'(z) / \partial w$, we have: 
\begin{eqnarray}
& A_{ij}''(z,w) = \begin{cases}
E_{ijk}'(w), & \mbox{if $z\in\alpha_{ij}$, $w\notin\alpha_{ij}$} 
\\
\\
\displaystyle{\sum_{k=1}^D} \alpha_{ijk} \cdot E_{ijk}''(z,w), & \mbox{if $z\notin\alpha_{ij}, w\notin\alpha_{ij}$}
\\
\\
0, & \mbox{otherwise}
\end{cases}
\nonumber
\end{eqnarray} 
As usual, since we have $|E_{ijk}|\leq 1$, and, for the triangle inequality, \mbox{$|\sum_{k=1}^D \alpha_{ijk} \cdot E_{ijk}'(z)| \leq \sum_{k=1}^D |\alpha_{ijk}| \cdot |E_{ijk}'(z)|$}, by using the hypothesis $|\alpha_{ijk}| \leq \alpha$ and~\eqref{eq:boundsEppp} we obtain:
\begin{equation}
|A_{ij}'(z)| \leq \max\left\{1,\, 2 D \alpha \gamma(X_i + R)Y_i \right\}
\label{eq:boundsAp}
\end{equation}
With analogous considerations as above, we obtain:
\begin{eqnarray}
& & |A_{ij}''(z,w)| \nonumber\\
& \leq & \max\left\{\begin{array}{c}
2 \gamma(X_i + R)Y_i, \\
2 D \alpha \gamma [(1+2\gamma(X_i + R)^2)Y_i^2+(X_i + R)Z_i] 
\end{array}\right\}\,. \nonumber\\
\label{eq:boundsApp}
\end{eqnarray}
Finally, we prove two boundary inequalities for the derivatives of the functions $G_{ij}$'s. Let us define \mbox{$W_{ij}\triangleq \{W_{ijh}\}_{h=1}^{H_{i-1}}$}. According to the definition of $G_{ij}$ in~\eqref{eq:defg}, we have:
\begin{eqnarray}
& G_{ij}'(z) = \begin{cases}
A_{i-1h}, & \mbox{if $z\in W_{ij}$} 
\\
\\
1, & \mbox{if $z=b_{ij}$}
\\
\\
\displaystyle{\sum_{h=1}^{H_{i-1}}} W_{ijh} \cdot A_{i-1h}'(z), & \mbox{if $z\notin W_{ij} \cup \{b_{ij}\}$}
\end{cases}
\nonumber
\end{eqnarray}
and, since $G_{ij}''(z,w)=\partial G_{ij}'(z) / \partial w$, we have: 
\begin{eqnarray}
& G_{ij}''(z,w) = \begin{cases}
A_{i-1h}'(w), 
& \mbox{if } \begin{cases}
z\in W_{ij} 
\\ w\notin W_{ij} \cup \{b_{ij}\}
\end{cases}
\\
\\
\displaystyle{\sum_{h=1}^{H_{i-1}}} W_{ijh} \cdot A_{i-1h}''(z,w), & \mbox{if } \begin{cases}
z\notin W_{ij} \cup \{b_{ij}\} \\ w\notin W_{ij} \cup \{b_{ij}\}
\end{cases} 
\\
\\
0, & \mbox{otherwise}
\end{cases}
\nonumber
\end{eqnarray}
When $i=1$, recalling that $A_{i-1h}$ is constant and equal to $x_h$, we simply have $A_{i-1h}'(z)=0$ and $A_{i-1h}''(z,w)=0$ for any $z,w\in\vc{w}$, and we also have $|A_{i-1h}|=|x_h|\leq a$. Therefore:
\begin{equation}
|G_{1,j}'(z)| \leq \max \left\{1,a\right\}\,,
\label{eq:boundsG1p}
\end{equation}
\begin{equation}
|G_{1,j}''(z,w)| \leq 0\,.
\label{eq:boundsG1pp}
\end{equation}
Otherwise, when $i\geq 2$, using the triangle inequality and the boundary results~\eqref{eq:boundsA}, \eqref{eq:boundsAp} and~\eqref{eq:boundsApp}, we get:
\begin{equation}
|G_{ij}'(z)| \leq \max\left\{
\begin{array}{c}
D \alpha, 
\\1, 
\\2 H_{i-1} W D \alpha \gamma(X_{i-1} + R)Y_{i-1},
\end{array}
\right\}\,.
\label{eq:boundsGp}
\end{equation}
and $|G_{ij}''(z,w)|$ is less or equal than the maximum among:
\begin{equation}
\begin{array}{c}
1,\, \\ 
2 D \alpha \gamma(X_{i-1} + R)Y_{i-1}, \\
2 H_{i-1} W \gamma(X_{i-1} + R)Y_{i-1}, \\
2 H_{i-1} W D \alpha \gamma [(1+2\gamma(X_{i-1} + R)^2)Y_{i-1}^2+(X_{i-1} + R)Z_{i-1}] 
\end{array} \\
\label{eq:boundsGpp}
\end{equation}
In particular, for obtaining these results we used the identities \mbox{$\max\{A,A,B\}=\max\{A,B\}$}, \mbox{$\max\{A,\max\{B,C\}\}=\max\{A,B,C\}$} and $c\cdot\max\{A,B\}=\max\{c \cdot A,c \cdot B\}$, for any $c\geq0$.

\subsection{Asymptotic bounds of the affine transformations.}
In this subsection we conclude the proof, proving the missing piece~\eqref{eq:VWQasym}. To this aim, by exploiting the recursive structure of the network, we prove by induction a more general result, such that, for any fixed $i \geq 2$, the functions $G_{ij}'(z)$'s and $G_{ij}''(z,w)$'s are all bounded by: 
\begin{equation}
Y_i\in\mathcal{O}\left( (\gamma H^3)(\gamma H^4)^{i-2} \right), \quad Z_i\in\mathcal{O}\left( (\gamma H^2)^2(\gamma H^4)^{2(i-2)} \right)\,.
\label{eq:VWasym}
\end{equation}
Then, since we assumed $Q\geq2$, this general result implies the desired one in~\eqref{eq:VWQasym}. Preliminarily, we recall that the asymptotic notation used here is intended to be relative to the variables $H_i$'s and $\gamma$, whereas the parameters $m$, $a$, $W$, $b$, $\alpha$ are intended to be constants. Technically speaking, this means that, for example, expressions like $2 D \alpha \gamma H_i + b$ belong to $\mathcal{O}(\gamma H_i)$. In the next, we will repeatedly use some well known properties of the asymptotic notation as, for example, if $x\in\mathcal{O}(f)$ and $y\in\mathcal{O}(g)$ then $x+y\in\mathcal{O}(\max\{f,g\})$, $x\cdot y\in\mathcal{O}(f \cdot g)$ and $\max\{x,y\}\in\mathcal{O}(\max\{f,g\})$.

We start by inductively proving the first part of~\eqref{eq:VWasym} about the $Y_i$'s. Let us consider the base case $i=2$. From~\eqref{eq:boundsG1} and~\eqref{eq:boundsG1p}, we note that the bounds of the functions $G_{1j}$'s and $G_{1j}'(z)$'s do depend neither by $H_2$ nor by $\gamma$, and, therefore, they are two constants $X_1\in\mathcal{O}(1)$ and $Y_1\in\mathcal{O}(1)$.
Furthermore, using the hypothesis $R\in\mathcal{O}(D)$, and recalling that $H \triangleq \max\left\{H_1, \ldots, H_Q, D\right\}$, we have that \mbox{$(X_1+R)\in\mathcal{O}(\max\{1,H\})=\mathcal{O}(H)$}, and, therefore:
\begin{equation}
\begin{array}{c}
D \alpha \in\mathcal{O}(H),\\
1 \in\mathcal{O}(1), \\
2 H_{1} W D \alpha \gamma \cdot (X_{1} + R)Y_{1} \in\mathcal{O}( \gamma H^2 \cdot H ),
\end{array}
\label{eq:asym1}
\end{equation}
Since we know $X_1$ and $Y_1$, we can use~\eqref{eq:boundsGp}, and by using~\eqref{eq:asym1} we can rewrite it in asymptotic notation. In this way, we get that the boundary $Y_2$ of the functions $G_{2j}'(z)$'s is in the order of:
\begin{equation}
\mathcal{O}(\max\{H\,, 1\,, \gamma H^3 \})=\mathcal{O}(\gamma H^3)\,,
\label{eq:boundsG2p_asym}
\end{equation}
where the last equality follows from the assumption $\gamma H^2 \geq 1$ and from the obvious inequality $H\geq 1$. Since the result~\eqref{eq:boundsG2p_asym} matches the claim in~\eqref{eq:VWasym} for the case $i=2$, we proved the base case. 

Now, we turn on the inductive step, namely, we prove~\eqref{eq:VWasym} assuming that it holds for $i-1$. First, from~\eqref{eq:boundsG} we note that when $i>2$ the functions $G_{i-1j}$'s have bound $X_{i-1}$ such that $X_{i-1}\in\mathcal{O}(H^2)$.
We have again $D \alpha \in\mathcal{O}(H)$ and $1 \in\mathcal{O}(1)$, like for the base case, but now, since \mbox{$(X_{i-1}+R)\in\mathcal{O}(\max\{H^2,H\})=\mathcal{O}(H^2)$}, and since $Y_{i-1}$ fulfils~\eqref{eq:VWasym}, we have:
\begin{eqnarray}
& & 2 H_{i-1} W D \alpha \gamma \cdot (X_{i-1} + R)Y_{i-1} \nonumber\\
& \in & \mathcal{O}\left(\gamma H^2 \cdot H^2(\gamma H^3)(\gamma H^4)^{i-2} \right) \nonumber\\
& = & \mathcal{O}\left((\gamma H^3)(\gamma H^4)^{i-1} \right).
\label{eq:asym2}
\end{eqnarray}
By using~\eqref{eq:asym2}, we can rewrite~\eqref{eq:boundsGp} in asymptotic notation for the case $i>2$. We obtain that the boundary $Y_{i}$ of functions $G_{ij}'(z)$'s is in the order of:
\begin{equation}
\mathcal{O}(\max\{H\,, 1\,, \gamma H^3 (\gamma H^4)^{i-1} \})=\mathcal{O}(\gamma H^3 (\gamma H^4)^{i-1}).
\label{eq:boundsGip_asym}
\end{equation}
For the boundaries $Z_i$'s, the same identical arguments apply. For space constraints, we provide only a brief sketch of their proof that is, anyway, complete of all the most technical intermediate results. For the base case $i=2$, since $G_{1j}''(z,w)=0$ and $(X_{i-1}+R)\in\mathcal{O}(H)$, the asymptotic counterpart of~\eqref{eq:boundsGpp} becomes:
\begin{equation}
\mathcal{O}\left(\max\{1,\gamma H^2,\gamma H^2,(\gamma H^2)^2\}\right)=\mathcal{O}\left((\gamma H^2)^2\right)\,.
\end{equation}
Again, the last equality holds because from hypothesis \mbox{$\gamma H^2 \geq 1$}. Finally, assuming that the second part of~\eqref{eq:VWasym} holds for $i-1$, the asymptotic version of~\eqref{eq:boundsGpp} is:
\begin{equation}
\mathcal{O}\left(\max\left\{ 
\begin{array}{c}
	1,\\
	\gamma H^3 (\gamma H^3)(\gamma H^4)^{i-2},\\
	\gamma H^3 (\gamma H^3)(\gamma H^4)^{i-2},\\
	\gamma H^2 (\gamma H^4) (\gamma H^3)^2(\gamma H^4)^{2(i-2)},\\
	\gamma H^2(H^2)(\gamma H^2)^2(\gamma H^4)^{2(i-3)}
\end{array} \right\}\right)
\end{equation}
The maximum returns the fourth entry (recall that \mbox{$\gamma H^2 \geq 1$}), that in compact form reduces to $(\gamma H^2)^2(\gamma H^4)^{2(i-1)}$, namely, to the claim~\eqref{eq:VWasym}.

\section{Experimental evaluation}
\label{sec:experimental_evaluation}

\begin{figure}
    \centering
    \includegraphics[width=0.9\columnwidth]{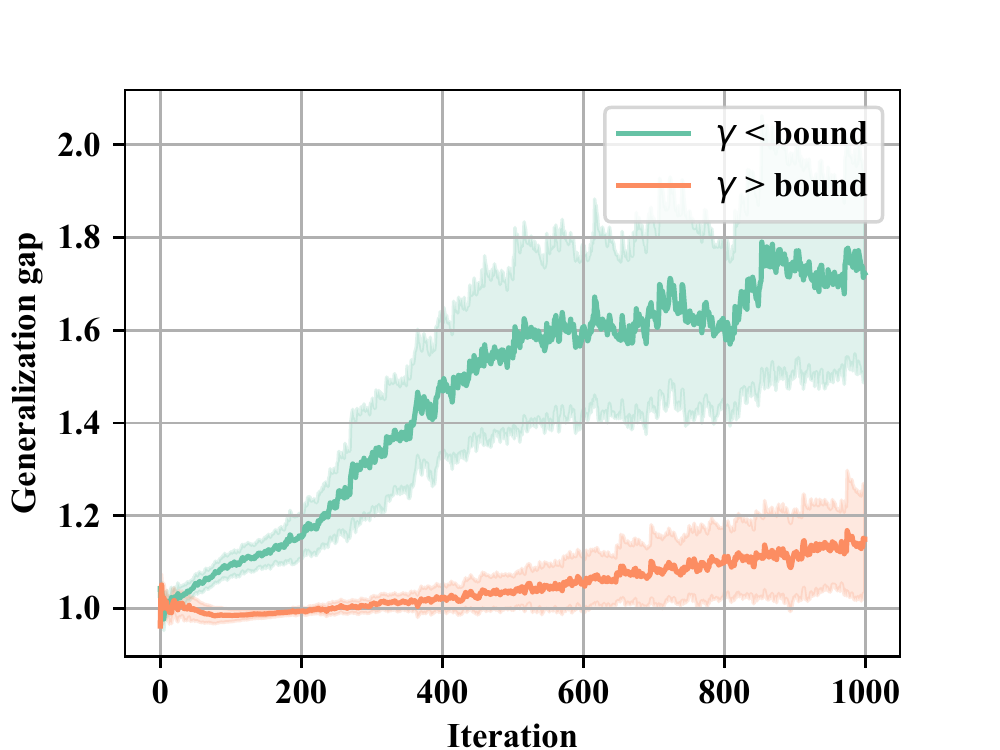}
    \caption{Generalization gap for two models trained with different $\gamma$ hyper-parameters.}
    \label{fig:generalization_gap}
\end{figure}

In this section we provide a small experimental verification of our main theorem (Theorem \ref{theorem:kafnets_smoothness}). We note that this section is not intended to verificate experimentally KAFs, as this was done extensively in the original publication \cite{scardapane2019kafnets} and later works. Instead, we showcase a simplified toy scenario to test the validity of our bounds.

We use the experimental procedure from \cite{guyon2003design} (which is also implemented in the scikit-learn Python library) to generate a simple two-class classification problem, where each class is described by two clusters randomly assigned to the vertices of a $2$-dimensional hypercube. To this we add two additional random features with no correlation to the actual class. We sample $1000$ points as our training set, and another $1000$ for testing. We compare two NNs with one $10$-dimensional hidden layer each endowed with KAFs nonlinearities. In both cases, each KAF has $20$ elements in the dictionary sampled uniformly from $\left[-3.0, 3.0\right]$, while the mixing coefficients are randomly initialized from a normal distribution. For the first KAF we select $\gamma=1.0$, while for the second one we select a smaller $\gamma=0.005$. We remark that only the former choice is consistent with the condition $\gamma H^2 \ge 1$ in Theorem 3. Furthermore, we remark that the choice $\gamma=1$ and the values assigned to the other hyperparameters are coherent with the ones considered in \cite{scardapane2019kafnets}); this is an important point, since we are interested to test generalization bounds for a Kafnet that already proved to work well in practice. We train the two networks with the Adam optimization algorithm on random mini-batches of $32$ elements, evaluating at every step the empirical risk on a separate mini-batch taken from the test portion of the dataset.

Both models converge to a final training loss which is $\approx 10^{-1}$ in only a few iterations. However, we plot in Fig. \ref{fig:generalization_gap} the generalization gap of the two models, that we compute as the ratio between the test empirical risk and the training empirical risk. As can be seen, the model satisfying the assumptions of our theorem only starts to overfit after a few hundred iterations, while the model with the lower $\gamma$ starts to overfit almost immediately, with a final empirical risk which is almost double for the new data than for the training data.

\section{Conclusions and discussion}
\label{sec:conclusions}

In this paper we provided an analysis of the generalization capabilities of a new class of non-parametric activation functions, i.e., kernel activation functions (KAFs). The analysis builds on top of recent results with respect to the interplay between stochastic gradient descent (SGD) and stability of the functions being optimized. Our main theorem shows that a NN endowed with KAFs is stable and generalizes well when trained with SGD for a finite number of steps, provided it satisfies a simple bound on one of its hyper-parameters.

Here, we focused on a specific variant of KAF, which is the one used in \cite{scardapane2019kafnets}. For future research, we aim at generalizing our theorems to more types of kernel functions. In addition, we are eager to investigate different types of generalization analyses. For example, in \cite{marra2018learning} it is shown that an activation function similar to KAFs can appear from an optimization problem formulated in a functional space. As a large literature has been devoted to complexity measures for these scenarios, it would be interesting to explore the interplay and possible extensions of these tools to the case considered here.

\bibliographystyle{IEEEtran}
\bibliography{biblio}

\end{document}